\theoremstyle{plain}
\newtheorem{theorem}{Theorem}[section]
\newtheorem{lemma}[theorem]{Lemma}
\theoremstyle{definition}
\newtheorem{definition}[theorem]{Definition}
\theoremstyle{remark}
\newcommand\R{\mathbb{R}}
\newcommand{\damien}[1]{\textbf{\color{blue}[D: #1]}}
\definecolor{iccvblue}{rgb}{0.21,0.49,0.74}
\title{Leaner Transformers: More Heads, Less Depth}
\author{
Hemanth Saratchandran\textsuperscript{1} \\
{\tt\small hemanth.saratchandran@adelaide.edu.au}
\and
Damien Teney\textsuperscript{2} \\
{\tt\small damien.teney@idiap.ch}
\and
Simon Lucey\textsuperscript{1} \\
{\tt\small simon.lucey@adelaide.edu.au}
\\[1ex]
\textsuperscript{1}Australian Institute for Machine Learning, University of Adelaide \\
\textsuperscript{2}Idiap Research Institute
}
\begin{document}
\maketitle
\begin{abstract}
Transformers have reshaped machine learning by utilizing attention mechanisms to capture complex patterns in large datasets, leading to significant improvements in performance. This success has contributed to the belief that ``bigger means better'', leading to ever-increasing model sizes.
This paper challenge this ideology by showing that many existing transformers might be unnecessarily oversized.
We discover a theoretical principle that redefines the role of multi-head attention.
An important benefit of the multiple heads is in improving the conditioning of the attention block.
We exploit this theoretical insight and redesign popular architectures with an increased number of heads.
The improvement in the conditioning proves so significant in practice that model depth can be decreased,
reducing the parameter count by up to 30-50\%
while maintaining accuracy.
We obtain consistent benefits across a variety of transformer-based architectures of various scales, on tasks in computer vision (ImageNet-1k) as well as language and sequence modeling (GLUE benchmark, TinyStories, and the Long-Range Arena benchmark).

\end{abstract}
    
\section{Introduction}\label{sec:intro}

\begin{figure}[t]
    \centering
    ~~~~~\textcolor[HTML]{FFA301}{\raisebox{-0.03em}{\scalebox{0.9}[0.9]{\ding{108}}}}~
    \footnotesize Ours (more heads, fewer layers)~~~
    \textcolor[HTML]{4D38A3}{\raisebox{-0.03em}{\scalebox{0.9}[0.9]{\ding{108}}}}~
    \footnotesize Original
    \\\vspace{2pt}
    \includegraphics[width=0.75\linewidth]{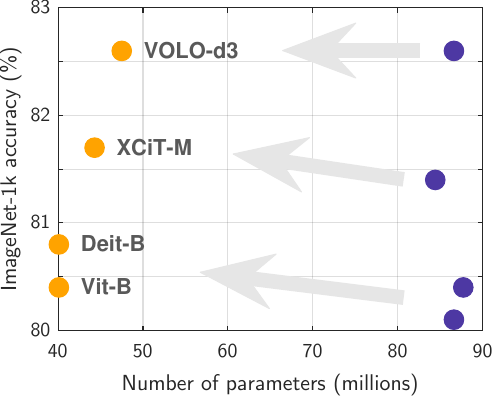}
    \vspace{-5pt}
    \caption{We redesign popular transformers models with an increased number of heads,
    using the theoretical insight that 
    multi-head attention contributes to improving the conditioning of attention blocks.
    The benefits are so significant that we can 
    reduce model depth while
    maintaining or improving accuracy, using about 50\% fewer parameters.}
    \label{fig:front_fig}
\end{figure}

Transformers~\cite{vaswani2017attention} have become the dominant architecture across a wide range of fields, including natural language processing (NLP)~\cite{vaswani2017attention, devlin2018bert, zhuang2021robustly, zhen2022cosformer}, computer vision~\cite{dosovitskiy2020image, carion2020end, liu2021swin, touvron2021training}, and robotics~\cite{fu2024drive, maiti2023transfusion, salzmann2020trajectron++}. At the heart of their success lies the attention mechanism, which dynamically assigns relevance scores to input elements, enabling the model to generate highly contextualized outputs. This ability allows transformers to capture complex dependencies
in data more effectively than traditional architectures.

As transformers continue to scale, the prevailing belief is that heavy overparameterization is necessary for strong performance. A standard decoder-only transformer increases capacity through three primary means: (1) expanding the number of attention heads, (2) widening the feedforward layers, and (3) deepening the network by adding more layers. However, no well-established guidelines exist for balancing these components to achieve optimal performance. Extensive research has explored the role of width and depth in improving optimization for convolutional and feedforward networks~\cite{agarwal2021deep, arora2018optimization, zhou2018understanding, kabkab2016size, li2018tighter, liu2022loss, jacot2018neural}.
For transformers however, our understanding of the trade-offs between width and depth remains incomplete~\cite{levine2020limits,levine2020depth,petty2023impact,sanford2023representational}.

In this paper, we challenge the conventional approach to transformer design and ask whether these models are structured optimally. We introduce a theoretical principle that offers a new perspective on the role of multi-head attention, demonstrating that it inherently improves the conditioning of attention layers.
This produces a matrix with a low condition number, which is the ratio of a matrix's largest to smallest singular values. This quantifies its stability: a high condition number indicates ill-conditioning, which can hinder convergence of gradient-based optimization~\cite{nocedal1999numerical}.
We theoretically show that using multiple heads lowers the condition number of attention layers and therefore facilitates the optimization of transformers.

We verify empirically that increasing the number of attention heads in transformers significantly improves the condition number of the attention block. We then use these insights to guide the design of transformer models, focusing on trade-offs with depth, one of the main choice in architecture design.
We find empirically that transformers can often be redesigned with more attention heads and fewer layers while maintaining both optimization stability and accuracy.
Since each layer corresponds to a large number of parameters,
trading additional heads for fewer layers
enables a substantial reduction
in model size without compromising performance.


We validate our findings by modifying and re-training
a range of existing models for vision and NLP tasks.
We show that attention heads can be consistently traded for depth, resulting in more parameter-efficient architectures without sacrificing performance (see \cref{fig:front_fig}).
While we lack a full theoretical explanation for this trade-off, our results raise important questions. Are transformers unnecessarily over-parameterized?
Are other trade-offs possible by improving the conditioning of existing architectures?
These results open multiple opportunities for future empirical and theoretical work.

\noindent
Our contributions are summarized as follows.
\begin{enumerate}[itemsep=1pt,topsep=0.5pt]
\item A theoretical framework offering a new perspective on multi-head attention, indicating that one of its core functions is to better condition the attention block.
\item An empirical design principle for transformers derived from our theoretical insights,
suggesting that model depth can be traded for additional heads to reduce parameter count without compromising accuracy.
\item A comprehensive empirical validation of downstream benefits for a variety of existing models on  standard vision and NLP tasks: image classification with ImageNet-1k~\cite{steiner2021train}, language modeling with TinyStories~\cite{eldan2023tinystories} and GLUE benchmark \cite{wang2018glue}, and long-context reasoning with the LRA benchmark~\cite{tay2021long}.
\end{enumerate}

\section{Related Work}\label{sec:rel_work}
\paragraph{Efficient attention-based architectures.}
Numerous approaches have been proposed to enhance the efficiency and effectiveness of transformers, particularly by reducing the computational complexity of the attention layer. DeiT (Data-Efficient Image Transformer)~\cite{touvron2021training} improves training efficiency by leveraging distillation tokens, enabling strong performance with significantly fewer data requirements. XCiT (Cross-Covariance Image Transformer)~\cite{ali2021xcit} introduces a novel attention mechanism that operates on spatial feature cross-covariances, improving feature interactions while substantially reducing computational overhead. VOLO (Vision Outlooker)~\cite{yuan2022volo} incorporates outlook attention, which efficiently captures long-range dependencies, outperforming traditional vision transformers (ViTs) while maintaining computational efficiency. Nyströmformer~\cite{xiong2021nystromformer} tackles the quadratic complexity of self-attention using a Nyström-based approximation, reducing it to near-linear time while preserving key attention properties.
Other efficient transformer variants have further addressed attention-related bottlenecks. Linformer~\cite{wang2020linformer} approximates self-attention with low-rank projections, achieving linear complexity by compressing the sequence length dimension. Performer~\cite{choromanski2020rethinking} employs kernelized attention with random feature projections, enabling scalable attention with linear time complexity. Reformer~\cite{kitaev2020reformer} utilizes locality-sensitive hashing to significantly reduce memory and computational costs, making attention efficient even for long sequences.

We take a different approach, exploring whether the inherent complexity of transformers can be reduced to create more compact models that maintain strong performance. 
Our insights on conditioning are orthogonal to the above methods
and we demonstrate benefits on several of the aforementioned architectures (ViTs, Nystr\"omformers).



\paragraph{Network width and depth.}
A vast literature has explored the roles of width and depth~\cite{lu2017expressive,poole2016exponential,vardi2022width}
and their interplay with gradient-based optimization.
For example, Liu et al.~\cite{liu2021swin} demonstrated that increasing the width of multi-layer perceptrons (MLPs) enhances the conditioning of their neural tangent kernel (NTK)~\cite{jacot2018neural}, leading to more effective optimization. Arora et al.~\cite{arora2018optimization} showed that, in linear MLPs, depth serves as a preconditioner for stochastic gradient descent, improving optimization as depth increases. Similarly, Agarwal et al.~\cite{agarwal2021deep} found that depth enhances the conditioning of non-linear MLPs, provided that activations are properly normalized, thereby facilitating better convergence with gradient-based algorithms.

The above studies underscore the importance of both width and depth in achieving good optimization for MLPs.
A similar theoretical understanding for transformers is lacking~\cite{levine2020limits,levine2020depth,petty2023impact,sanford2023representational}
and our work helps fill this gap. 
We also reveal a crucial role of the multi-head attention in the optimization of transformers
and explore its empirical relationship with model depth.


\section{Theoretical Findings}\label{sec:theory}

\subsection{Preliminaries}\label{subsec:prelims}

\paragraph{Transformers.}
We first briefly review the the transformer architecture~\cite{vaswani2017attention,dosovitskiy2020image}.
A transformer is composed of stacked layers, also known as ``transformer blocks''.
Each layer is formally represented as a mapping 
$\mathbf{T}: \mathbb{R}^{N \times D} \rightarrow \mathbb{R}^{N \times D}$
defined by the expression
    $\mathbf{T}(X) = \mathbf{F}(\mathbf{A}(X) + X)$.
The component \( \mathbf{F} \) denotes a feedforward multi-layer perceptron (MLP, typically with one hidden layer and a residual connection), and \( \mathbf{A} \) represents the self-attention mechanism.

The self-attention mechanism \( \mathbf{A} \) uses three learnable matrices,
the query (\( Q \)), key (\( K \)), and value (\( V \)) matrices.
Given an input sequence \( X \in \mathbb{R}^{N \times D} \),
the matrices are first applied as follows:
\( q\!=\!QX \), \( k\!=\!KX \), \( v\!=\!VX \),
where
\( Q, K \in \mathbb{R}^{D \times d} \)
\;and
\( V \in \mathbb{R}^{D \times M} \).
These are then combined to produce the output of the self-attention head as follows:
    $\mathbf{A}(X):=\operatorname{\mathbf{softmax}}(\,q\,k^T)\,v$,
where the softmax is applied row-wise. In this paper, whenever we speak of an attention matrix we will mean the matrix
$\operatorname{\mathbf{softmax}}(\,q\,k^T)\,v$.
Multiple parallel attention heads $\mathbf{A}_i$ are typically used ($1 \leq i \leq h$), each of dimension \( N \times \frac{D}{h} \).
Their outputs are concatenated as
    $[\mathbf{A}_1, \cdots, \mathbf{A}_h]$,
which is then fed into the MLP.
Additional normalizations and residual connections are often interleaved depending on the model's specific details.

\paragraph{Condition number.}
The condition number of a matrix is the ratio of its largest to smallest singular values. In gradient-based optimization of linear and non-linear systems, the condition number serves as a quantitative measure of how well the optimizer will converge. Lower values indicate a more stable and efficient convergence.
Conversely, a matrix is said to be \textbf{ill-conditioned} if the condition number is high.
Ill-conditioned matrices in non-linear systems lead to difficulties for gradient descent to converge~\cite{nocedal1999numerical}.

\begin{definition}\label{defn:condition_num}
  The \textbf{condition number} of a full-rank, \( n \times m \) matrix \( A \) is defined as
    $\kappa(A)\!:=\!\sigma_1(A) \;/\; \sigma_k(A)$, with the singular values
    \( \sigma_1(A) \geq \dots \geq \sigma_k(A) \) and \( k=\min(m, n)\).
\end{definition}
\noindent
Since \( A \) is of full rank, all singular values are positive and the condition number is thus well defined. And since $\sigma_1(A) \geq \sigma_k(A)$, the condition number satisfies $\kappa(A) \geq 1$. 


\subsection{Main Theoretical Result}
\label{subsec:main_theorem}

Our main finding states that multi-head attention
has the implicit effect of conditioning the self-attention block within a transformer layer,
which leads to attention matrices (\( A_i \)) with a low condition number.
This in turn facilitates the optimization of transformers by gradient descent.

\setlength{\abovedisplayskip}{6pt}
\setlength{\belowdisplayskip}{6pt}

\begin{theorem}\label{thm:main}
Let $\mathbf{A}_i \in \R^{N\times \frac{D}{h}}$ be i.i.d Gaussian random variables ($1 \leq i \leq h$).
We define the multi-head matrix block
 $\mathbf{A} = [\mathbf{A}_1, \cdots, \mathbf{A}_h]$
of dimension $N \times D$ and assume 
$D >> N$. Then, the condition number
\begin{equation}
    \kappa(\mathbf{A}) \approx 1.
\end{equation}
Moreover,
if we fix the dimension of the attention heads $d > 0$ such that
$\mathbf{A}_i \in \R^{N\times d}$, we have:
\begin{equation}\label{eqn:h_infty}
 \kappa(\mathbf{A}) \rightarrow 1 \text{ ~~as~~ } 
 h \rightarrow \infty.
\end{equation}
\end{theorem}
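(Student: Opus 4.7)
The plan is to reduce both claims to a classical concentration bound on the extreme singular values of a rectangular Gaussian matrix. The first observation is that the block structure is inessential at the level of the distribution: stacking $h$ independent blocks $\mathbf{A}_i$ whose entries are i.i.d.\ Gaussian simply produces an $N \times D$ matrix whose entries are i.i.d.\ $\mathcal{N}(0,\sigma^2)$, and the common variance $\sigma^2$ cancels in the condition number. Hence I can discard the partition into heads entirely and treat $\mathbf{A}$ as a single Gaussian matrix of size $N \times D$ with $D \geq N$.

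The key tool is the Davidson--Szarek / Gordon bound for Gaussian singular values: for such $\mathbf{A}$ and any $t > 0$,
\[ \sqrt{D} - \sqrt{N} - t \;\leq\; \sigma_{\min}(\mathbf{A}) \;\leq\; \sigma_{\max}(\mathbf{A}) \;\leq\; \sqrt{D} + \sqrt{N} + t \]
with probability at least $1 - 2\exp(-t^2/2)$. Dividing the upper bound by the lower bound gives
\[ \kappa(\mathbf{A}) \;\leq\; \frac{1 + \sqrt{N/D} + t/\sqrt{D}}{1 - \sqrt{N/D} - t/\sqrt{D}}. \]

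Both halves of the theorem fall out of this inequality. For the statement $\kappa(\mathbf{A}) \approx 1$ under $D \gg N$, I pick a moderate slack such as $t = N^{1/4}$; then both $\sqrt{N/D}$ and $t/\sqrt{D}$ vanish as $D/N \to \infty$, forcing the right-hand side to $1$ with probability tending to $1$. For the limit in \eqref{eqn:h_infty}, I fix the head dimension $d$ so that $D = hd$ grows while $N$ stays fixed, and choose $t_h = c\sqrt{\log h}$ with $c > \sqrt{2}$; the deterministic part $\sqrt{N/D}$ tends to $0$ and the tail probabilities $2 h^{-c^2/2}$ are summable, so Borel--Cantelli delivers $\kappa(\mathbf{A}) \to 1$ almost surely.

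The main obstacle is not a mathematical one, since Davidson--Szarek supplies the essential estimate in a single line, but an interpretive one. The written statements ``$\kappa(\mathbf{A}) \approx 1$'' and ``$\kappa(\mathbf{A}) \to 1$'' conceal a probabilistic mode of convergence that should be pinned down (in probability, in expectation, or almost surely), and the only step demanding real care is calibrating the slack $t_h$ against the deterministic gap $\sqrt{N/D}$ so that the intended mode of convergence is achieved; everything else is a direct application of standard non-asymptotic random matrix theory.
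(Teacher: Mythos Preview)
Your proposal is correct and follows essentially the same route as the paper: both reduce to the standard singular-value concentration $\sigma_{\min},\sigma_{\max}\approx\sqrt{D}\mp\sqrt{N}$ for a Gaussian $N\times D$ matrix (the paper cites Vershynin for exactly this fact, which is the Davidson--Szarek bound you invoke), and then observe that the ratio tends to $1$ as $D/N\to\infty$ or $h\to\infty$. Your treatment is the more careful of the two---the paper works only with heuristic ``$\approx$'' signs and never specifies a mode of convergence, whereas you supply explicit slack parameters and a Borel--Cantelli argument for almost-sure convergence---but the underlying argument is identical.
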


\noindent
To prove \cref{thm:main} we will need the following lemma.
\begin{lemma}\label{lem;random_draws}
Let $X$ be a matrix in $\R^{m\times n}$ with $n >> m$ whose entries are i.i.d drawn from a Gaussian distribution. Then $X$ is full rank with probability 1. 
\end{lemma}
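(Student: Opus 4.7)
The plan is to show that the event $\{X \text{ is rank-deficient}\}$ sits inside a Lebesgue-null subset of $\R^{mn}$, and then to transfer this from Lebesgue measure to Gaussian measure using absolute continuity. Since $n \geq m$, being full rank means rank exactly $m$, which is equivalent to at least one $m \times m$ submatrix of $X$ being invertible.

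First I would select a single $m \times m$ minor — say the determinant $p$ of the leftmost $m \times m$ block of $X$ — and note that $p$ is a polynomial in the $mn$ entries of $X$. The set $\{X : p(X) = 0\}$ contains the set where this particular block is singular, which in turn contains the set where $X$ has rank less than $m$ only after I take the union over all $\binom{n}{m}$ choices of columns. So more precisely I would define $p_1, \dots, p_{\binom{n}{m}}$ as the determinants of all $m \times m$ column-submatrices and observe
\begin{equation}
\{X \text{ not full rank}\} \;=\; \bigcap_{j} \{p_j(X) = 0\} \;\subseteq\; \{p_1(X) = 0\}.
\end{equation}

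Next I would invoke the standard fact that the zero set of a non-identically-zero real polynomial in finitely many variables has Lebesgue measure zero (proved by induction on the number of variables, using Fubini together with the observation that a univariate non-zero polynomial has only finitely many roots). To apply this I must verify $p_1 \not\equiv 0$, which is immediate by plugging in any matrix whose leftmost $m \times m$ block is the identity; then $p_1$ evaluates to $1$. Hence $\{p_1(X) = 0\}$ is Lebesgue-null, and so is the smaller event $\{X \text{ not full rank}\}$.

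Finally, because the entries of $X$ are i.i.d.\ Gaussian, the joint law on $\R^{mn}$ has a (strictly positive) density with respect to Lebesgue measure, so every Lebesgue-null set is also Gaussian-null. Therefore $X$ is full rank with probability $1$. The argument is largely bookkeeping; the only point requiring a moment of care is exhibiting a matrix on which $p_1$ is non-zero to certify that the ambient polynomial is not identically zero, and the identity-block construction handles this cleanly. The assumption $n \gg m$ is stronger than needed — $n \geq m$ suffices — so no obstacle arises from the hypotheses of the lemma.
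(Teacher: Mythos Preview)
Your proposal is correct and follows a genuinely different route from the paper. Both arguments share the final step---Gaussian measure is absolutely continuous with respect to Lebesgue measure, so a Lebesgue-null event has probability zero---but they establish Lebesgue-nullity of the rank-deficient locus differently. The paper proceeds geometrically and sequentially: writing $X=[X_1,\dots,X_n]$ with columns $X_i\in\R^m$, it draws the columns one at a time and observes that each new column lands in the span of the previous ones (a proper subspace, hence a Lebesgue-null slice of the ball) with probability zero, so with probability one the first $m$ columns are linearly independent; it also takes a detour through unit-length vectors and then removes the normalization. You instead proceed algebraically: the rank-deficient set is contained in the zero locus of a single $m\times m$ determinant $p_1$, a non-identically-vanishing polynomial in the $mn$ entries, and such zero loci are Lebesgue-null. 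Your route is shorter, avoids the unit-length normalization step, and sidesteps a minor imprecision in the paper's write-up (which speaks of all $n>m$ columns being linearly independent in $\R^m$, where presumably the first $m$ are intended). The paper's approach, on the other hand, makes the column-by-column independence explicit without invoking the polynomial-zero-set fact as a black box.
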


\noindent
The proof of Lemma \ref{lem;random_draws} is given in \cref{app:theory}.

\begin{proof}[Proof of \cref{thm:main}]

The proof will proceed by using some well known facts about random matrices, see \cite{vershynin2018high} for proofs. Firstly given a random Gaussian matrix $X$ of full rank and size $m \times n$ with $n >> m$ we have that the minimum singular value, $\sigma_m(X)$, and maximum singular value of $X$, $\sigma_1(X)$, satisfy
\begin{equation}\label{supp:eqn:sing_vals_asymptotics}
    \sigma_m(X) \approx \sqrt{n} - \sqrt{m} \text{ and }
    \sigma_m(X) \approx \sqrt{n} + \sqrt{m}.
\end{equation}

\noindent
We start by proving the first part of the theorem. Observe that, by assumption, the multi-head matrix 
\begin{equation}
    \mathbf{A} = [\mathbf{A}_1, \cdots, \mathbf{A}_h]  
\end{equation}
has shape $N  \times D$ where $D >> N$. Furthermore, since each $\mathbf{A}_1$ is drawn i.i.d.\ from a Gaussian distribution, we have from Lemma~\ref{lem;random_draws} that $\mathbf{A}$ has full rank which is $N$. Then, applying Eq.~\eqref{supp:eqn:sing_vals_asymptotics} we have that 
\begin{equation}
 \sigma_N(\mathbf{A}) \approx \sqrt{D} - \sqrt{N}   
 \text{ and } 
 \sigma_1(\mathbf{A}) \approx \sqrt{D} + \sqrt{N}.   
\end{equation}
By the definition of the condition number, we then find that
\begin{equation}
\kappa(\mathbf{A}) := \frac{\sigma_1(\mathbf{A})}{\sigma_N(\mathbf{A})} \approx 
\frac{\sqrt{D} + \sqrt{N}}{\sqrt{D} - \sqrt{N}}.
\end{equation}
Since $D >> N$, we have 
$\frac{\sqrt{N}}{\sqrt{D} - \sqrt{N}} \approx 0$ and 
\begin{align}
\frac{\sqrt{D}}{\sqrt{D} - \sqrt{N}} &= 
\frac{\sqrt{D} - \sqrt{N}}{\sqrt{D} - \sqrt{N}} + 
\frac{\sqrt{N}}{\sqrt{D} - \sqrt{N}}     \\
&\approx \frac{\sqrt{D} - \sqrt{N}}{\sqrt{D} - \sqrt{N}} \\
&= 1
\end{align}
This then implies that
\begin{align}
\kappa(\mathbf{A}) &\approx 
\frac{\sqrt{D} +  \sqrt{N}}{\sqrt{D} - \sqrt{N}} \\
&= 
\frac{\sqrt{D}}{\sqrt{D} - \sqrt{N}} + 
\frac{\sqrt{N}}{\sqrt{D} - \sqrt{N}} \\
&\approx 
1
\end{align}
which proves the first part of the theorem.
\vspace{6pt}

\noindent
To prove the second part of the theorem, observe that if $h \rightarrow \infty$ then, using Eq.~\eqref{supp:eqn:sing_vals_asymptotics}, the condition number
is given by
\begin{equation}
\kappa(\mathbf{A}) \approx \frac{\sqrt{dh} + \sqrt{N}}{\sqrt{dh} - \sqrt{N}} \rightarrow 1 \text{ as } 
h \rightarrow \infty.\vspace{-5pt}
\end{equation}
\end{proof}
\vspace{-15pt}
The theorem highlights that, while an individual attention matrix
$\mathbf{A}_i$ of dimension $N \times \frac{D}{h}$
may not be well-conditioned, the \emph{concatenation} of multiple such matrices improves their overall condition number. This insight offers a new perspective on multi-head attention: it functions as an implicit conditioner, enhancing the conditioning of each attention block within a transformer.

\paragraph{Observation.} We observe that in \cref{eqn:h_infty} we could have also let $d$ go to infinity and the same proof shows that the matrix $\mathbf{A}$ would have condition number going to $1$. However, observe that, when $d$ is fixed, each attention head computes an $N\!\times\!d\,$ projection independently. With $h$~heads, these computations can be parallelized, allowing efficient scaling. In contrast, increasing~$d$ while keeping~$h$ fixed enlarges each head's computation, leading to slower training due to reduced parallelism. Therefore in this paper, we will focus on lowering the condition number of $\mathbf{A}$ by increasing the number of heads.


\subsection{Trading Depth for Heads}\label{subsec:heads_depth}


We demonstrated in \cref{thm:main} that additional heads
improve the conditioning of an attention layer.
We now examine how this can translate into tangible performance gains.

\paragraph{Conditioning in MLPs.}
The existing literature provides theoretical support for improved performance
of MLPs with better-conditioned weight matrices trained with gradient descent.
\citet{liu2022loss} used the Neural Tangent Kernel (NTK) framework~\cite{jacot2018neural}
to show that increasing network width reduces the NTK's condition number, thereby enhancing convergence.
As MLPs widen, their weight matrices enter the regime described in \cref{thm:main} where the condition number approaches $1$. By direct application of the chain rule, this implies that the improved conditioning of the weight matrices leads to a better-conditioned NTK. Complementary studies \cite{agarwal2021deep,arora2018optimization} reveal that increasing depth also helps conditioning for gradient-based optimizers. Together, these results underscore the dual importance of both width and depth in the optimization of MLPs.


\paragraph{What about transformers?}
Each transformer layer consists of a multi-head attention and an MLP.
Transformers employ wide MLPs, typically 2$\times$ to 4$\times$ the dimension of token embeddings.
They are thus likely to be well conditioned.
We therefore focus on widening the attention block by increasing the number of heads.
According to \cref{thm:main}, we expect this to bring the condition number of
each attention block towards $1$.
We will verify empirically in
\cref{sec:exps}
that this is indeed the case (\cref{fig:vitb_condition_heads}).



\paragraph{Trading depth and width.}
The literature discussed above suggests that depth and width have complementary roles for
the optimization of neural models.
We therefore hypothesize that increasing 
the number of attention heads
could be matched with a reduction in depth while maintaining performance.
The motivation stems from the fact that each layer uses a large amount of parameters,
hence a reduction in depth quickly decreases the model size.
In other words, additional attention heads could enable the design of compact transformers that perform comparably to deeper ones.

The experiments in \cref{sec:exps}
will extensively validate this hypothesis across a range of architectures and tasks.
A theoretical explanation as to why reducing depth yields such strong performance
is still incomplete. Our results open important questions for future work
about optimal architecture design from both theoretical and empirical perspectives.


\section{Experiments}\label{sec:exps}

We perform extensive experiments 
with a variety of transformer-based models.
Our goals are (1)~to empirically verify the prediction of \cref{thm:main} about improvements in conditioning
and (2)~to evaluate the downstream benefits on standard vision and NLP tasks: image classification with ImageNet-1k~\cite{steiner2021train}, language modeling with TinyStories~\cite{eldan2023tinystories}, and long-context reasoning with the LRA benchmark~\cite{tay2021long}.

\subsection{Image Classification}\label{subsec:image_classi}

We consider standard large vision transformers (ViTs) from the literature.
We modify their architecture according to the findings from \cref{sec:theory} and re-train them from scratch on ImageNet-1k~\cite{steiner2021train}.
Our approach enables reductions in parameter count by up to 30\%--50\% of existing models
without compromising their accuracy. 
The explicit training details, implementation and hardware used for all experiments in this subsection can be found in \cref{supp:subsec:vits}. 

\subsubsection{Standard ViTs}
We use the ViT-Base (ViT-B) architecture~\cite{dosovitskiy2020image},
a popular model for image classification.
The model processes an input image as non-overlapping patches of $16\!\times\!16$ pixels.
They are linearly projected into token embeddings of dimension $768$
that serve as input to the transformer layers.
ViT-B uses $12$ layers, each with $12$ attention heads of dimension $64$ ($12 \times 64\!=\!768$, the initial token embedding size).
Its MLPs use hidden layers of size $4 \times 768\!=\!3,072$.

\paragraph{Validating the effects on conditioning.}
To validate \cref{thm:main},
we systematically vary the number of heads in a ViT-B and re-train the model on ImageNet-1k.
We train each model to convergence i.e.\ for about 300 epochs.
For each training run, every 50 epochs, we compute the condition number of each layer's attention matrix and average them across layers. 
We examine the results in \cref{fig:vitb_condition_heads}
and observe that the condition number decreases markedly as the number of heads increases, thus validating \cref{thm:main}.

\begin{figure}[h!]
    \centering
    \includegraphics[width=0.92\linewidth]{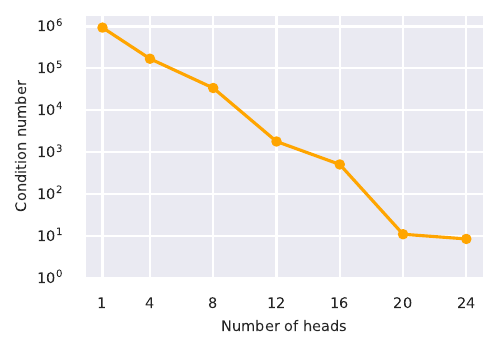}
    \caption{Empirical measurement of the condition number of the attention layers
    in ViT-Bs with different numbers of heads.
    The conditioning improves (lower number) with additional heads,
    following the predictions of \cref{thm:main}.}
    \label{fig:vitb_condition_heads}
    \vspace{-14pt}
\end{figure}

\begin{figure*}[t]
    \centering
    \footnotesize \textbf{\hspace{1.5em} 12 Layers \hspace{21em} 8 Layers}\\[-1pt]
    \includegraphics[width=0.81\linewidth,trim=0 0 0 5pt,clip]{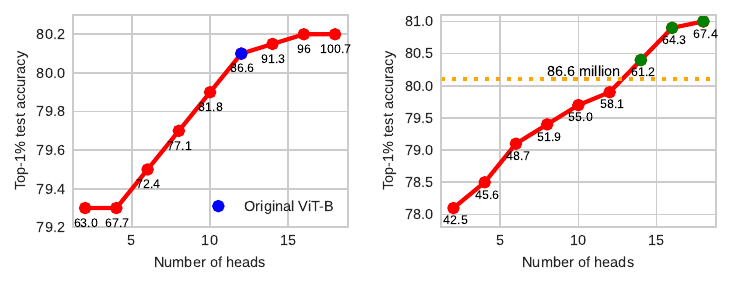}
    \vspace{-10pt}
    \caption{
    Accuracy on ImageNet-1k of variants of ViT-B with the original depth (12 layers, left)
    or reduced to 8 layers (right).
    Each point is annotated with the model's total number of parameters (in millions).
    According to our predictions, the number of heads correlates with performance.
    Remarkably, our models with reduced depth (right) and $\geq$12 heads (green dots)
    all obtain a \textbf{higher test accuracy with fewer parameters}
    than the original model (dotted line).}
    \label{fig:vitb_depth_heads}
\end{figure*}

\begin{figure*}[t]
    \vspace{-4pt}
    \centering
    \footnotesize \textbf{\hspace{0em} 12 Layers \hspace{21em} 8 Layers}\\[-1pt]
    \includegraphics[width=0.81\linewidth,trim=0 0 0 5pt,clip]{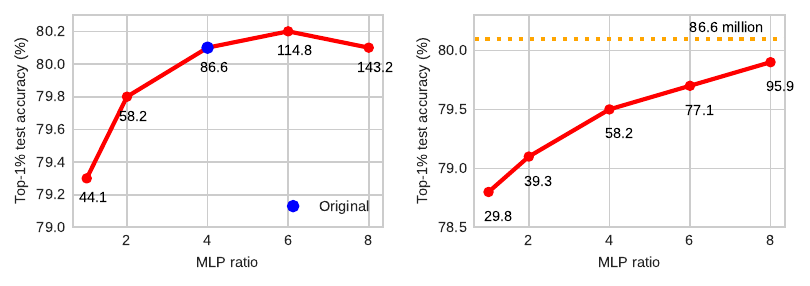}
    \vspace{-10pt}
    \caption{Similar experiments as \cref{fig:vitb_depth_heads},
    where each model is now a variant of ViT-B with a \textbf{different MLP width} (X axes, reported as a factor of the token-embedding size).
    According to our predictions, increasing the width of MLPs has a weaker effect than adding attention heads.
    The slight benefit observed with 12 layers (left) cannot compensate for a reduction of depth to 8 layers (right), unlike what was observed with additional heads in \cref{fig:vitb_depth_heads}.}
    \label{fig:vitb_mlp_ratio}
    \vspace{-4pt}
\end{figure*}

\paragraph{New model configurations.}
We first fix the depth at $12$ layers as in the original model, and vary the number of heads from $2$ to $18$, keeping a constant head dimension of $64$.
Following the discussion in \cref{subsec:heads_depth}, we then consider a reduced depth of $8$ layers, and vary again the number of heads from $2$ to $18$.
We train each configuration on ImageNet-1k and measure the top-1\% accuracy. Training uses the AdamW optimizer for $300$ epochs following standard strategy from prior work~\cite{steiner2021train} (details in \cref{app:expDetails}).

The results in \cref{fig:vitb_depth_heads} (left)
show a clear improvement in accuracy as the number of heads increases, including higher performance than the original model with $>$12 layers, at the cost of additional parameters.
We then examine a model with a depth reduced from 12 to 8 layers (\cref{fig:vitb_depth_heads}, right).
The accuracy is again correlated with the number of heads.
The smaller number of layers largely makes up for those in additional heads,
and all configurations with $>$12 heads surpass the accuracy of the original one
with a much smaller parameter count (61.2\,--\,67.4\,M vs.\ 86.6\,M).

\begin{figure}[h!]
    \centering
    {\footnotesize MLP hidden-layer width:}~
    \textcolor[HTML]{FFA301}{\raisebox{-0.03em}{\scalebox{0.9}[0.9]{\ding{110}}}}
    \footnotesize 2$\times$~~
    \textcolor[HTML]{FFA301}{\raisebox{-0.03em}{\scalebox{0.9}[0.9]{\ding{108}}}}
    \footnotesize 4$\times$ token embedding size
    \\\vspace{4pt}
    \includegraphics[width=0.99\linewidth]{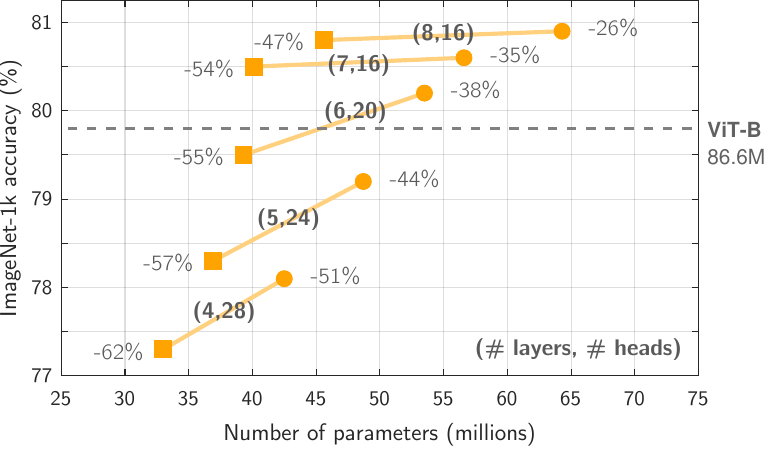}
    \caption{Additional variants of ViT-B with different numbers of layers and heads, and MLP width.
    Each model is annotated with its reduction in parameters.
    For 6–-8 layers, doubling the MLP width yields little benefit,
    indicating that the number of heads is more important.}
    \label{fig:vit_fig5}
    \vspace{-10pt}
\end{figure}

\begin{figure}[ht!]
    \centering
    ~~~~~\textcolor[HTML]{FFA301}{\raisebox{-0.03em}{\scalebox{0.9}[0.9]{\ding{108}}}}
    \footnotesize Ours (more heads, fewer layers)~~~
    \textcolor[HTML]{4D38A3}{\raisebox{-0.03em}{\scalebox{0.9}[0.9]{\ding{108}}}}
    \footnotesize Original
    \\\vspace{1pt}
    \includegraphics[width=0.813\linewidth]{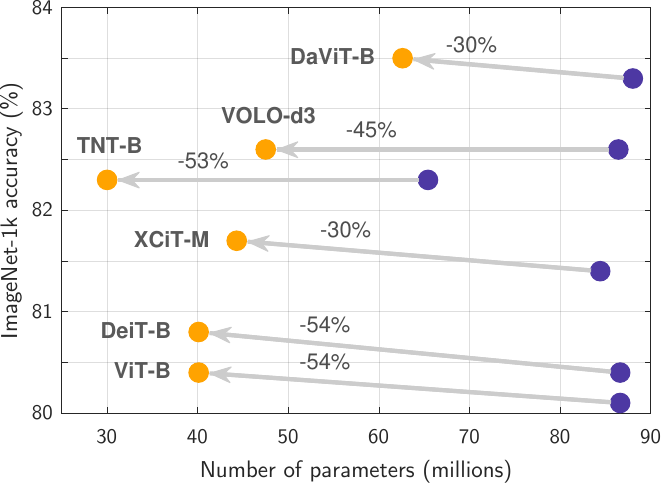}~~~~
    \\[1.9pt]
    \includegraphics[width=0.82\linewidth]{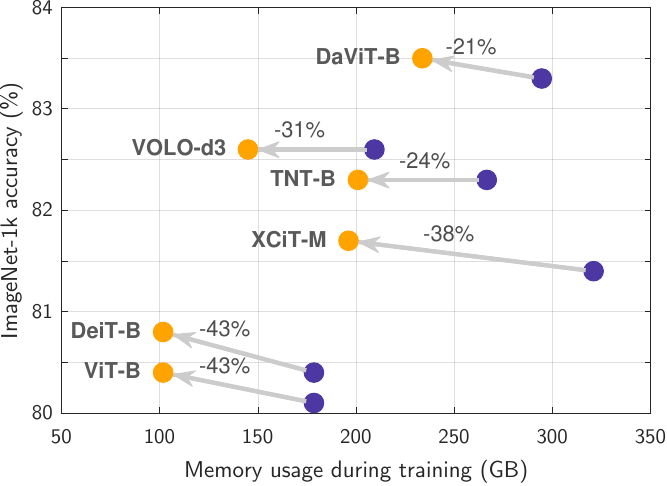}
    \vspace{-4pt}
    \caption{Other vision transformer architectures.
    We plot improvements in accuracy against reductions in parameter count (top) and memory usage during training (bottom).
    All models benefit significantly from our approach.}
    \label{fig:baseViTs}
\end{figure}

\begin{figure}[ht!]
    \centering
    ~~~~~\textcolor[HTML]{FFA301}{\raisebox{-0.03em}{\scalebox{0.9}[0.9]{\ding{108}}}}
    \footnotesize Ours (more heads, fewer layers)~~~
    \textcolor[HTML]{4D38A3}{\raisebox{-0.03em}{\scalebox{0.9}[0.9]{\ding{108}}}}
    \footnotesize Original
    \\\vspace{2.5pt}
    \includegraphics[width=0.82\linewidth]{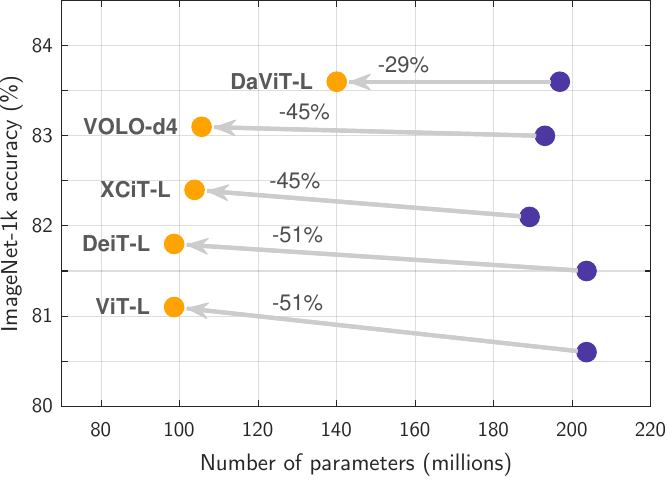}
    \\[3.5pt]
    \includegraphics[width=0.81\linewidth]{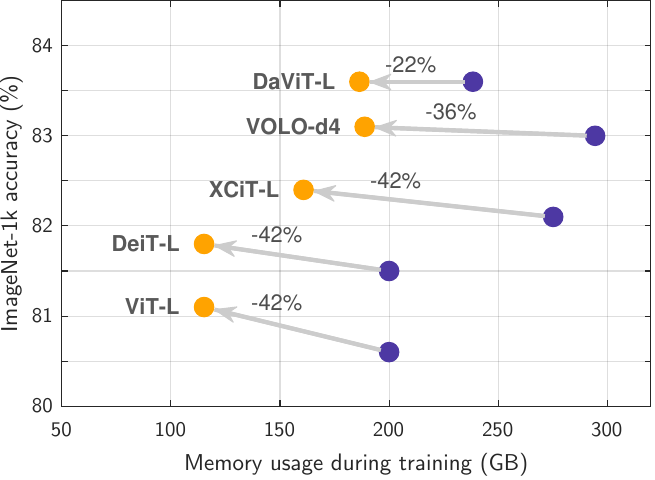}
    \vspace{-4pt}
    \caption{Large vision transformer architectures.
    We observe the same improvements in accuracy, parameter count, and memory usage as with other models.}
    \label{fig:largeViTs}
\end{figure}

\paragraph{MLP width.}
We now consider variations of the hidden-layer size of the MLPs inside a ViT-B model,
as an alternative strategy to affect the width of the model.
The original model uses a size of $768\!\times\!4\,=\,3,072$, where 768 is the token embedding size and 4 is referred to as the ``MLP ratio''.
We train models with a ratio between 1 and 8.
\cref{fig:vitb_mlp_ratio} shows a limited impact on accuracy
that contrasts with the clear large effects of the number of heads from \cref{fig:vitb_depth_heads}.
This agrees with the hypothesis made in \cref{subsec:heads_depth}
that MLPs are likely to be already well-conditioned and do not benefit in this regard as much as  attention blocks in transformers.

\paragraph{Best configurations.}
We evaluate additional configurations with depths below $8$ in \cref{fig:vit_fig5}.
We adjust the number of heads to match the accuracy of the original ViT-B ($\geq80.1\%$).
All configurations still use much fewer parameters than the original model
with a better accuracy.

\subsubsection{Other Vision Transformers}\label{subsubsec:other_vits}
We apply our strategy to a variety of alternative transformer-based architectures in the 60--90\,M parameter range:
DeiT \cite{touvron2021training}, XCiT \cite{ali2021xcit}, TNT \cite{han2021transformer}, VOLO \cite{yuan2022volo}, and DaViT \cite{ding2022davit}, all pretrained on ImageNet-1k.
We report our best configurations in \cref{fig:baseViTs} .
In all cases, reducing depth and increasing the number heads leads to models with similar or higher accuracy
with substantial reductions in parameter count.
This indicates that many models are unnecessarily oversized. 
This also corresponds to substantial reductions in memory during training (reported separately in \cref{fig:baseViTs}).

\vspace{-10pt}
\paragraph{Larger models.}
We also evaluate models in the 180\,--\,200\,M parameter range.
\cref{fig:largeViTs} shows similar improvements in accuracy, parameter count, and memory usage.

\subsection{Language Modeling}\label{subsec:language_models}
We evaluate our approach on two language models.

\vspace{-8pt}
\paragraph{Crammed BERT.}
We first consider the Crammed-BERT architecture~\cite{geiping2023cramming}.
trained on the Pile dataset~\cite{gao2021pile} following~\citet{geiping2023cramming}.
We evaluate these models on the GLUE benchmark~\cite{wang2018glue}.

\begin{table*}[h]
    \centering
    \small
    \setlength{\tabcolsep}{3pt}
    \renewcommand{\arraystretch}{1.15}
    \begin{tabular}{l|cccccccc|c|c|c}
        \toprule
        & MNLI & SST-2 & STSB & RTE & QNLI & QQP & MRPC & CoLA & GLUE & Parameters~~~ & Memory~~~ \\
        \midrule
        Crammed BERT {\footnotesize(original)}~~ & \textbf{83.8} & 92.3 & 86.3 & 55.1 & \textbf{90.1} & 87.3 & 85.0 & 48.9 & 78.6 
        & ~~~~$119$\,M~~~~~~~~~~~~  & ~~~~$13.8$\,GB~~~~~~~~~~~~ \\
        Crammed BERT {\footnotesize(ours)} & 83.7 & 92.3 &  86.3 & \textbf{55.3}  & 90.0  & 87.3  & \textbf{85.2}  & 48.9  & 78.6    
        &  ~~~~$84$\,M {\scriptsize($-29\%$)} & ~~~~$10.3$\,GB {\scriptsize($-25\%$)} \\
        \bottomrule
    \end{tabular}
    \caption{Comparison of a pretrained original Crammed BERT (16 layers, 12 heads per layer) with our leaner variant (10 layers, 24 heads) on the GLUE benchmark. For each task our learner variant achieves comparable performance with much less parameters.}
    \label{tab:bert_glue_results}
\end{table*}

We train several variants of Crammed BERT with different numbers of attention heads and layers.
The original model uses 12 heads and 16 layers.
As hypothesized, we find that increasing the number of heads leads to better performance, so much so that
the depth can be reduced and still match the performance of the original model (see \cref{tab:bert_glue_results}).
In particular, we find that 24 attention heads and 10 layers produce a compact architecture that performs similarly on GLUE as the original model.

\paragraph{GPT-2.}
We proceed similarly with a GPT-2 architecture
trained on the TinyStories dataset~\cite{eldan2023tinystories}.
As the original configuration, we use the 12-layer, 12-head model (89\,M parameters) from \citet{eldan2023tinystories}.
We then increase the number of heads to 16 while reducing the depth to 4 layers.
As shown in \cref{tab:gpt_results}, our variant outperforms the original one in validation loss. Moreover, it achieves these improvements with significantly fewer parameters and reduced memory usage during training.

%

\begin{table}[h]
    \centering
    \small
    \setlength{\tabcolsep}{3pt}
    \renewcommand{\arraystretch}{1.15}
    \begin{tabular}{lcc c}
        \toprule
         & Val. loss & Parameters & Memory\\ 
        \midrule
        GPT-2 {\footnotesize(original)} & 2.47 & $89$\,M & $12.8$\,GB \\

        GPT-2  {\footnotesize(ours)} & \textbf{2.41} & $64$\,M {\scriptsize(-$28\%$)} & $9.7$\,GB {\scriptsize(-$24\%$)}\\
        \bottomrule
    \end{tabular}
    \caption{GPT-2 models trained on the TinyStories dataset. We compare a baseline model with 12 layers and 12 attention heads~\cite{eldan2023tinystories} and our variant with 4 layers and 16 heads. We achieve superior performance at a much smaller size and memory usage.}
    \label{tab:gpt_results}
\end{table}

\subsection{LRA Benchmark with Nystr\"omformers}\label{subsec:nytstroem}

\begin{figure*}[t]
    \centering
    \footnotesize \textbf{\hspace{0em} 2 Layers \hspace{23.3em} 1 Layer}~\\
    \includegraphics[width=0.92\linewidth,trim=0 0 0 5pt,clip]
    {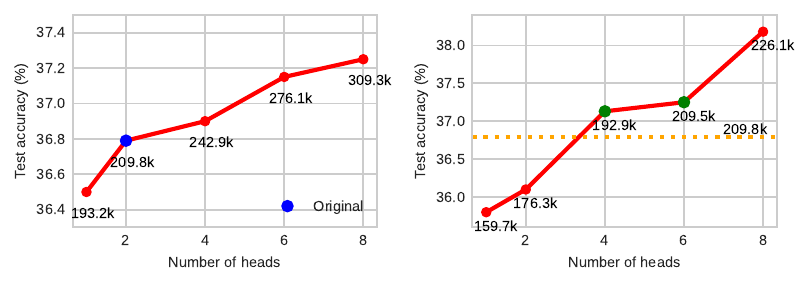}
    \vspace{-7pt}
    \caption{Accuracy on the ListOps task of the LRA benchmark with variants of the Nystr\"omformer.
    The original model from \citet{xiong2021nystromformer} uses 2 layers (left)
    and we also evaluate models with a single layers (right).
    Each model is annotated with its total number of parameters.
    According to our predictions, the number of heads correlates with performance.
    Remarkably, our models with just 1 layer and $\geq$\,4 heads (green dots)
    all obtain a \textbf{higher test accuracy with fewer parameters}
    than the original model (dotted line).}
    \label{fig:nystroem_heads}
    \vspace{9pt}
\end{figure*}

\begin{figure*}[t]
    \centering
    \footnotesize \textbf{\hspace{0em} 2 Layers \hspace{23.3em} 1 Layer}~\\
    \includegraphics[width=0.92\linewidth,trim=0 0 0 5pt,clip]
    {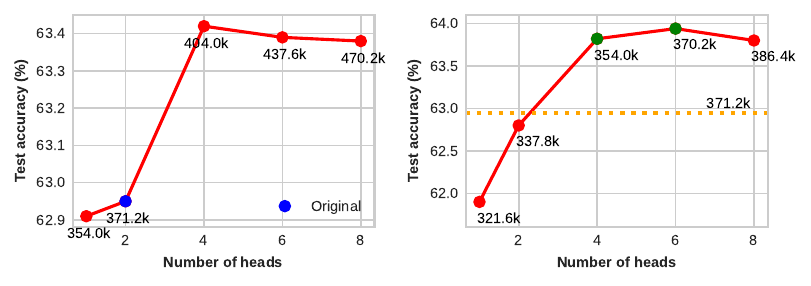}
    \vspace{-7pt}
    \caption{Accuracy on the text classification task of the LRA benchmark with variants of the Nystr\"omformer.
    The original model from \citet{xiong2021nystromformer} uses 2 layers (left)
    and we also evaluate models with a single layers (right).
    Each model is annotated with its total number of parameters.
    According to our predictions, the number of heads correlates with performance.
    Remarkably, our models with just 1 layer and $\geq$\,4 heads (green dots)
    all obtain a \textbf{higher test accuracy with fewer parameters}
    than the original model (dotted line).}
    \label{fig:nystroem_text_class}
\end{figure*}

We evaluate our approach on Nystr\"omformers~\cite{xiong2021nystromformer},
a transformer-like architecture that uses an approximation of the self-attention with better computational complexity.
Our objective is to evaluate the relevance of our findings to an architecture that slightly departs from the original transformer architecture of \citet{vaswani2017attention}.
Nystr\"omformers are well suited to long sequences and we therefore evaluate them on the
Long-Range Arena (LRA) benchmark~\cite{tay2021long}.

Our base model follows the original paper~\cite{xiong2021nystromformer}
and uses 2 layers and 2 attention heads per layer.
We also train variants with 2-8 heads and 1-2 layers.
The results on the ListOps task (see \cref{fig:nystroem_heads}) and the Text classification task (see \cref{fig:nystroem_text_class})
show that additional heads increase the accuracy.
This allows reducing the depth to a single layer while improving its accuracy.
These results hold across other tasks of the LRA benchmark (see \cref{tab:nystrom_results}).

\begin{table}[!ht]
    \centering
    \small
    \renewcommand{\arraystretch}{1.1}
    \setlength{\tabcolsep}{8pt}
    \begin{tabular}{cccl}
        \midrule
        \multicolumn{4}{c}{\normalsize\textbf{ListOps}} \\
        \midrule
        (Depth, heads) & Top-1\% Acc. & Parameters & ~~ \\
        \midrule
        (2, 2) & 36.79 & 209.8k &  \\[2pt]
        (1,4) & \textbf{37.13} & 192.9k & (-9\%) \\
        \midrule
        \multicolumn{4}{c}{\normalsize\textbf{Text Classification}} \\
        \midrule
        (Depth, heads) & Top-1\% Acc. & Parameters & ~~ \\
        \midrule
        (2, 2) & 62.95 & 371.2k &  \\[2pt]
        (1,4) & \textbf{63.82} & 354.0k & (-5\%) \\
        \midrule
        \multicolumn{4}{c}{\normalsize\textbf{Document Retrieval}} \\
        \midrule
        (Depth, heads) & Top-1\% Acc. & Parameters & ~~ \\
        \midrule
        (2, 2) & 79.3 & 394.8k &  \\[2pt]
        (1,4) & \textbf{79.5} & 394.8k & (same) \\
        \midrule
        \multicolumn{4}{c}{\normalsize\textbf{Image Classification}} \\
        \midrule
        (Depth, heads) & Top-1\% Acc. & Parameters & ~~ \\
        \midrule
        (2, 2) & 37.2 & 191.2k &  \\[2pt]
        (1,4) & \textbf{38.2} & 191.2k & (same) \\
        \midrule
        \multicolumn{4}{c}{\normalsize\textbf{Pathfinder}} \\
        \midrule
        (Depth, heads) & Top-1\% Acc. & Parameters & ~~ \\
        \midrule
        (2, 2) & 69.8 & 190.2k &  \\[2pt]
        (1,4) & \textbf{69.9} & 190.2k & (same) \\
        \midrule
    \end{tabular}
    \vspace{-2pt}
    \caption{Evaluation of variants of the Nystr\"omformer~\cite{xiong2021nystromformer} on different datasets of the Long-Range Arena (LRA) benchmark~\cite{tay2021long}.
    We compare the original model (2 layers, 2 heads) with our variant (1 layer, 4 heads). On every task, it outperforms the original model with the same number or slightly fewer parameters.}
    \label{tab:nystrom_results}
\end{table}


\section{Conclusions}

In this work, we reexamined the role of multi-head attention in transformers. Our theoretical analysis revealed that increasing the number of heads improves the conditioning of the attention matrices, a finding we confirmed empirically on vision transformers. Building on previous studies of MLP conditioning, we hypothesized that an increase of the number of heads could reduce the depth required to achieve high performance.
We tested this idea on tasks including image classification, language generation, and long sequence modeling, and found that leaner, shallower architectures with more attention heads perform comparably to their deeper counterparts. These results suggest a promising avenue for designing more efficient transformers without sacrificing performance.

\subsection*{Limitations and Open Questions}
\begin{itemize}[itemsep=6.5pt,topsep=3.5pt]
    \item We empirically demonstrated that depth can be traded off for more attention heads while maintaining performance. However, a theoretical explanation for this balance is still missing. Can we quantitatively predict the trade-offs of specific architectural variations?
    \vspace{16pt} 

    \item Our main theorem shows that increasing the number of heads improves the condition number of attention layers.
    The subsequent effect on task accuracy then rests on empirical results.
    How exactly does this form of conditioning impact training dynamics and downstream performance?
    
    \item
    Are there other architectural interventions that could achieve similar effects to the additional attention heads? Alternative methods for conditioning the attention layers could further improve the efficiency of transformers.
    
    \item Our resources allowed experiments on models with up to $\sim$200M parameters.
    Do the observed benefits persist at larger scales such as in $\sim$1B-parameter models?
\end{itemize}




\clearpage

{
    \small
    \bibliographystyle{ieeenat_fullname}
    \bibliography{main}
}

\clearpage
\setcounter{page}{1}
\appendix
\onecolumn
\maketitlesupplementary

\section{Theoretical Framework}\label{app:theory}

In \cref{sec:theory} we used Lemma \ref{lem;random_draws} in the proof of our main \cref{thm:main}. We give the proof of the lemma.

\begin{proof}[Proof of Lemma \ref{lem;random_draws}]
We first note that any measure defined via a Gaussian or probability distribution is absolutely continuous with respect to the Lebesgue measure \cite{stein2009real}. Meaning they have the same sets of measure zero as the Lebesgue measure.

Write $X = [X_1,\ldots, X_n]$ where each $X_i \in \R^m$ for 
$1 \leq i \leq n$. We first prove the case that 
that $\{X_1,\ldots,X_n\}$ are vectors of unit length. Since the vectors were drawn independently, we can first assume we drew $X_1$. The probability that this is the zero vector is $0$ w.r.t the Lebesgue measure on the closed unit ball $B_N(0)$ about the origin in $\R^N$ and hence any other measure absolutely continuous to it. Then draw $X_2$ and note that the probability that $X_2$ lies in 
$span\{X_1\} \cap B_N(0)$ is also $0$ since $span\{X_1\} \cap B_N(0)$ forms a set of $0$ Lebesgue measure in $B_N(0)$. Continuing in this way we find that $\{X_1,\ldots,X_n\}$ will be linearly independent with probability $1$ implying that the matrix $X$ has full rank.

For the general case where $\{X_1,\ldots,X_n\}$ are not drawn to have unit length i.e. drawn on the sphere in $\R^N$, we simply note that we can draw each one and then divide by its norm producing one of unit length. Since normalizing by the norm doesn't affect linear independence we get by the above case that  $\{X_1,\ldots,X_n\}$ must be linearly independent with probability~$1$.    
\end{proof}

\section{Experimental Details}
\label{app:expDetails}

\subsection{Vision transformers on ImageNet-1k}\label{supp:subsec:vits}

\paragraph{Detailed results for vision transformers} In \cref{subsubsec:other_vits}, we demonstrated that several base vision transformers from the literature, ranging from 60 to 90 million parameters, benefit from our approach of increasing the number of heads in each attention layer while reducing the overall depth. In every instance, our configuration performed on par with or better than the original architecture while significantly lowering both parameter count and memory usage (see \cref{fig:baseViTs}). The detailed configurations are provided in \cref{supp:tab:vitbs}. 

We also showed that our methodology could be applied to larger vision transformers with roughly 180-200 million parameters (\cref{fig:largeViTs}). The configurations for these larger ViTs are given in \cref{supp:tab:vitls}.

\begin{table*}[!ht]
    \centering
    \resizebox{\textwidth}{!}{%
    \setlength{\tabcolsep}{12pt}
    \begin{tabular}{c|c|c c c c}
        \midrule
        \multicolumn{6}{c}{\textbf{ViT-B on ImageNet-1k}} \\
        \midrule
        (Depth, Heads) & MLP dim. & Top-1\% Acc. & Top-5\% Acc. & Params. (millions) & Memory (GB) \\
        \midrule
        \textcolor{red}{(12, 12)} & \textcolor{red}{3072} & \textcolor{red}{80.1} & \textcolor{red}{94.2} & \textcolor{red}{86.6} & \textcolor{red}{178.4} \\
        \textcolor{green}{(7,16)} & \textcolor{green}{1536} & \textcolor{green}{80.4} & \textcolor{green}{94.9} & \textcolor{green}{40.1} & \textcolor{green}{101.6} \\
        \midrule
        \multicolumn{6}{c}{\textbf{DeiT-B on ImageNet-1k}} \\
        \midrule
        (Depth, Heads) & MLP dim. & Top-1\% Acc. & Top-5\% Acc. & Params. (millions) & Memory (GB) \\
        \midrule
        \textcolor{red}{(12, 12)} & \textcolor{red}{3072} & \textcolor{red}{80.4} & \textcolor{red}{95.1} & \textcolor{red}{86.6} & \textcolor{red}{178.4} \\
        \textcolor{green}{(7,16)} & \textcolor{green}{1536} & \textcolor{green}{80.8} & \textcolor{green}{95.3} & \textcolor{green}{40.1} & \textcolor{green}{101.6} $\color{green} \downarrow$ \\
        \midrule
        \multicolumn{6}{c}{\textbf{XCiT-Medium on ImageNet-1k}} \\
        \midrule
        (Depth, Heads) & MLP dim. & Top-1\% Acc. & Top-5\% Acc. & Params. (millions) & Memory (GB) \\
        \midrule
        \textcolor{red}{(24, 8)} & \textcolor{red}{2048} & \textcolor{red}{81.4} & \textcolor{red}{95.5} & \textcolor{red}{84.4} & \textcolor{red}{320.8} \\
        \textcolor{green}{(12,16)} & \textcolor{green}{2048} & \textcolor{green}{81.7} & \textcolor{green}{95.6} & \textcolor{green}{59.0} & \textcolor{green}{196} \\
        \midrule
        \multicolumn{6}{c}{\textbf{TNT-B on ImageNet-1k}} \\
        \midrule
        (Depth, Heads) & MLP dim. & Top-1\% Acc. & Top-5\% Acc. & Params. (millions) & Memory (GB) \\
        \midrule
        \textcolor{red}{(12, 10)} & \textcolor{red}{2560} & \textcolor{red}{82.3} & \textcolor{red}{95.7} & \textcolor{red}{65.4} & \textcolor{red}{266.4} \\
        \textcolor{green}{(8,16)} & \textcolor{green}{2560} & \textcolor{green}{82.3} & \textcolor{green}{95.8} & \textcolor{green}{30.9} & \textcolor{green}{200.8} \\
        \midrule
        \multicolumn{6}{c}{\textbf{VOLO-d3 on ImageNet-1k}} \\
        \midrule
        (Depth, Heads) & MLP dim. & Top-1\% Acc. & Top-5\% Acc. & Params. (millions) & Memory (GB) \\
        \midrule
        \textcolor{red}{([8, 8, 16, 4], [8, 16, 16, 16])} & \textcolor{red}{(1024, 2048, 2048, 2048)} & \textcolor{red}{82.6} & \textcolor{red}{95.6} & \textcolor{red}{86} & \textcolor{red}{209.2} \\
        \textcolor{green}{([4, 4, 8, 2], [16, 32, 32, 32])} & \textcolor{green}{(768, 1536, 1536, 1536)} & \textcolor{green}{82.6} & \textcolor{green}{95.7} & \textcolor{green}{47.5} & \textcolor{green}{144.8} \\
        \midrule
        \multicolumn{6}{c}{\textbf{DaViT-B on ImageNet-1k}} \\
        \midrule
        (Depth, Heads) & MLP dim. & Top-1\% Acc. & Top-5\% Acc. & Params. (millions) & Memory (GB) \\
        \midrule
        \textcolor{red}{([1,1,9,1], [4, 8, 16, 32])} & \textcolor{red}{(512, 1024, 2048, 4096)} & \textcolor{red}{83.3} & \textcolor{red}{96.0} & \textcolor{red}{88.0} & \textcolor{red}{294.4} \\
        \textcolor{green}{([1, 1, 5, 1], [4, 8, 32, 32])} & \textcolor{green}{(512, 1024, 2048, 4096)} & \textcolor{green}{83.5} & \textcolor{green}{96.1} & \textcolor{green}{62.0} & \textcolor{green}{233.6} \\
        \midrule
        \end{tabular}%
    }
    \caption{Detailed configurations for a variety of base vision transformers from the literature. Increasing the heads and reducing depth (green) we obtain several transformers that outperform their original counterparts (red) with less parameters and less memory for training.}
    \label{supp:tab:vitbs}
\end{table*}

\begin{table*}[!ht]
    \centering
    \resizebox{\textwidth}{!}{%
    \setlength{\tabcolsep}{12pt}
    \begin{tabular}{c|c|c c c c}
        \midrule
        \multicolumn{6}{c}{\textbf{ViT-L on ImageNet-1k}} \\
        \midrule
        (Depth, Heads) & MLP dim. & Top-1\% Acc. & Top-5\% Acc. & Params. (millions) & Memory (GB) \\
        \midrule
        \textcolor{red}{(24, 16)} & \textcolor{red}{4096} & \textcolor{red}{80.6} & \textcolor{red}{94.4} & \textcolor{red}{203.6} & \textcolor{red}{200.0} \\
        \textcolor{green}{(8,30)} & \textcolor{green}{2048} & \textcolor{green}{81.1} & \textcolor{green}{95.1} & \textcolor{green}{98.6} & \textcolor{green}{115.2} \\
        \midrule
        \multicolumn{6}{c}{\textbf{DeiT-L on ImageNet-1k}} \\
        \midrule
        (Depth, Heads) & MLP dim. & Top-1\% Acc. & Top-5\% Acc. & Params. (millions) & Memory (GB) \\
        \midrule
        \textcolor{red}{(24, 16)} & \textcolor{red}{4096} & \textcolor{red}{81.5} & \textcolor{red}{95.3} & \textcolor{red}{203.6} & \textcolor{red}{200.0} \\
        \textcolor{green}{(8,30)} & \textcolor{green}{2048} & \textcolor{green}{81.8} & \textcolor{green}{95.4} & \textcolor{green}{98.6} & \textcolor{green}{115.2} $\color{green} \downarrow$ \\
        \midrule
        \multicolumn{6}{c}{\textbf{XCiT-L on ImageNet-1k}} \\
        \midrule
        (Depth, Heads) & MLP dim. & Top-1\% Acc. & Top-5\% Acc. & Params. (millions) & Memory (GB) \\
        \midrule
        \textcolor{red}{(24, 16)} & \textcolor{red}{3072} & \textcolor{red}{82.1} & \textcolor{red}{95.9} & \textcolor{red}{189.1} & \textcolor{red}{275.2} \\
        \textcolor{green}{(12,24)} & \textcolor{green}{3072} & \textcolor{green}{82.4} & \textcolor{green}{95.9} & \textcolor{green}{103.8} & \textcolor{green}{160.8} \\
        \midrule
        \multicolumn{6}{c}{\textbf{VOLO-d4 on ImageNet-1k}} \\
        \midrule
        (Depth, Heads) & MLP dim. & Top-1\% Acc. & Top-5\% Acc. & Params. (millions) & Memory (GB) \\
        \midrule
        \textcolor{red}{([8, 8, 16, 4], [12, 16, 16, 16])} & \textcolor{red}{(1536, 3072, 3072, 3072)} & \textcolor{red}{83.0} & \textcolor{red}{96.1} & \textcolor{red}{193.0} & \textcolor{red}{294.4} \\
        \textcolor{green}{([4, 4, 8, 2], [24, 32, 32, 32])} & \textcolor{green}{(768, 1536, 1536, 1536)} & \textcolor{green}{83.1} & \textcolor{green}{96.2} & \textcolor{green}{105.6} & \textcolor{green}{188.8} \\
        \midrule
        \multicolumn{6}{c}{\textbf{DaViT-L on ImageNet-1k}} \\
        \midrule
        (Depth, Heads) & MLP dim. & Top-1\% Acc. & Top-5\% Acc. & Params. (millions) & Memory (GB) \\
        \midrule
        \textcolor{red}{([1,1,9,1], [6, 12, 24, 48])} & \textcolor{red}{(768, 1536, 3072, 6144)} & \textcolor{red}{83.6} & \textcolor{red}{96.5} & \textcolor{red}{196.8} & \textcolor{red}{238.4} \\
        \textcolor{green}{([1, 1, 5, 1], [6, 12, 48, 48])} & \textcolor{green}{(768, 1536, 3072, 6144)} & \textcolor{green}{83.6} & \textcolor{green}{96.6} & \textcolor{green}{140.0} & \textcolor{green}{186.4} \\
        \midrule
        \end{tabular}%
    }
    \caption{Detailed configurations for a variety of large vision transformers from the literature. Increasing the heads and reducing depth (green) we obtain several transformers that outperform their original counterparts (red) with less parameters and less memory for training.}
    \label{supp:tab:vitls}
\end{table*}

\paragraph{Hardware and implementation.} 
 All models were trained on 8 Nvidia A100 GPUs using the code base from huggingface: https://github.com/huggingface/pytorch-image-models. Note that we couldn't find an implementation of a TNT large architecture in this code base and that is why we did not have TNT large in our analysis for large vision transformers. 
The training of each vision transformer architecture we considered follows \cite{steiner2021train} with explicit hyperparameter choices given in \cref{supp:tab:vit_hyperparameters}.

\begin{table}[!ht]
\centering
\small
\begin{tabular}{ll}
\toprule
\textbf{Hyperparameter} & \textbf{Value} \\
\midrule
Batch size & 1024 for base and 512 for large \\
\midrule
Number of epochs & 300 \\
\midrule
Learning rate & 3.00e-03 \\
\midrule
Optimizer & AdamW \\
\midrule
Weight decay & 0.3 \\
\midrule
Label smoothing & 0.1 \\
\midrule
Number of warm-up epochs & 20 \\
\midrule
Warmup learning rate & 1.00e-05 \\
\midrule
Mixup & 0.8 \\
\midrule
Cutmix & 1 \\
\midrule
Drop path & 0.1 \\
\midrule
RandAug & 9, 0.5 \\
\bottomrule
\end{tabular}
\caption{Hyperparameter settings for all vision transformer models.}
\label{supp:tab:vit_hyperparameters}
\end{table}

\subsection{Language Models}\label{supp:subsec:lm}

\paragraph{Hardware and implementation.} Both the Crammed BERT and GPT-2 models from \cref{subsec:language_models} were trained on one Nvidia A6000 GPU. The implementation, training and hyperparameters of the Crammed BERT model followed the original GitHub repo \cite{CrammingGit}. The GPT-2 models were trained following the paper \cite{eldan2023tinystories} and the github repo
\cite{tinystories_git}.

\subsection{Nystr\"omformer}

\paragraph{Hardware and implementation.} The Nystr\"omformer experiments carried out in \cref{subsec:nytstroem} were done on one Nvidia A6000 GPU. The implementation followed the original paper \cite{xiong2021nystromformer} and its GitHub repo \cite{nystromformer_github}.

\end{document}